\newtheorem{prop}{Proposition}
\begin{document}


\RUNAUTHOR{Wang and Paynabar}

\RUNTITLE{Maximum Covariance Unfolding Regression for Point Cloud Modeling}

\TITLE{Maximum Covariance Unfolding Regression: A Novel Covariate-based Manifold Learning Approach for Point Cloud Data}

\ARTICLEAUTHORS{%
\AUTHOR{Qian Wang}
\AFF{School of Industrial and Systems Engineering, Georgia Institute of Technology, Atlanta, GA 30332,  \EMAIL{qwang435@gatech.edu}} 
\AUTHOR{Kamran Paynabar}
\AFF{School of Industrial and Systems Engineering, Georgia Institute of Technology, Atlanta, GA 30332, \EMAIL{kamran.paynabar@isye.gatech.edu}}
} 

\ABSTRACT{%
Point cloud data are widely used in manufacturing applications for process inspection, modeling, monitoring and optimization. The state-of-art tensor regression techniques have effectively been used for analysis of structured point cloud data, where the measurements on a uniform grid can be formed into a tensor. However, these techniques are not capable of handling unstructured point cloud data that are often in the form of manifolds. In this paper, we propose a nonlinear dimension reduction approach named Maximum Covariance Unfolding Regression that is able to learn the low-dimensional (LD) manifold of point clouds with the highest correlation with explanatory covariates. This LD manifold is then used for regression modeling and process optimization based on process variables. The performance of the proposed method is subsequently evaluated and compared with benchmark methods through simulations and a case study of steel bracket manufacturing.
}%


\KEYWORDS{High-dimensional Data; Point Clouds; Process Modeling and Optimization; Manifold Learning; Maximum Covariance Unfolding}

\maketitle

%


\section{Introduction}

Point clouds, often referred to as high density data measured in a 3D coordinate space, contain rich spatial and geometric information about an object. Point cloud data play an increasingly important role in quality modeling and improvement of manufacturing processes where the shape of the manufactured product is geometrically complex \citep{inbook}. The dimensions and geometrical features of a part are often measured by laser scanners or touch-probe coordinate measuring machines (CMM). The resulting measurements, represented by a point cloud (see Figure \ref{fig:brktbend}), are consequently analyzed for quality inspection and monitoring, and process modeling and optimization. For example,  \cite{121791} and \cite{dryden} developed monitoring methods based on the distance of a point cloud to its nominal shape/geometry. For a detailed review of statistical process monitoring methods for point cloud data, see \cite{1641608}. Another body of literature focuses on process optimization and variation reduction for quality improvement by modeling the connection between process variables and point clouds. \cite{Colosimo2010} and \cite{colosimo2011analyzing} studied the impact of cutting speed and cutting depth in a turning process on the uniformity of manufactured cylinders using point clouds obtained from a CMM machine. \cite{doi:10.1080/00401706.2018.1529628} proposed a high-dimensional regression framework that models the relationship between the point-clouds and process variables, and used the estimated function to optimize a turning process. To build a regression model for point clouds, the general approach in the literature is to treat the response as a multivariate vector and regress it on the explanatory variables. A common two-step approach is comprised of reducing the dimensions using PCA first, and then, regressing the PC scores on the covariates. However, this approach fails to consider the relationship between response and covariates when finding a low-dimensional (LD) space. To take advantage of the covariates information, various methods including partial least square (PLS) \citep{helland1990partial}, sparse regression \citep{peng2010regularized}, function-on-scalar regression \citep{reiss2010fast}, and multiple tensor-on-tensor regression \citep{gahrooei2021multiple} have been used that can also deal with the high-dimensionality of point clouds.

The inherent assumption of the foregoing work is that the point cloud space is globally Euclidean. However, in many applications where the point clouds have complex shapes this assumption is not valid.  For example, the point cloud of a manufactured metal brackets shown in Figure \ref{fig:brktbend} \citep{dataset} has a complex geometry in which the distance between two points (e.g., one point on the curved area and the other on a flat arm) cannot globally be measured by the Euclidean distance. On the other hand, complex point clouds can often be represented by a manifold, a topological space that locally resembles Euclidean space near each point. That is, the distance of two near points can be measured by the Euclidean distance. This paper aims to propose a novel manifold-based learning and regression approach for modeling the variations of point clouds as a function of some scalars. The main analytical challenges in developing such a model include the non-Euclidean structure of point clouds, and their high-dimensionality that may lead to over-fitting. To address these challenges, we propose a manifold learning method that is capable of learning low-dimensional (LD) features containing important information about the relationship between the point clouds and explanatory variables, while preserving the properties of the original manifold.

Manifold Learning is a category of nonlinear dimension reduction techniques that aims to recover the low-dimensional ambient space of a data manifold using its local Euclidean structure. Many well-known manifold learning algorithms have been established in the literature. Examples include Isomap \citep{tenenbaum2000global}, Locally Linear Embedding \citep{roweis2000nonlinear}, Local Tangent Space Alignment \citep{zhang2004principal}, Laplacian Eigenmaps \citep{belkin2003laplacian}, Maximum Variance Unfolding \citep{666666}, and T-distributed Stochastic Neighbor Embedding \citep{van2008visualizing} that are used in different machine learning applications. For example, \cite{10.1145/1102351.1102455} and \cite{JMLR:v7:belkin06a} proposed a semi-supervised learning algorithm to learn from both labeled and unlabeled data using a novel data-dependent manifold regularization. \cite{5238610} developed an active learning framework on graphs based on Laplacian regularization. \cite{doi:10.1080/00401706.2015.1049751} applied manifold learning to discover the profile-to-profile variation pattern over high dimensional samples. \cite{doi:10.1080/00401706.2020.1772114} proposed to monitor the low-level spectrum of the Laplace–Beltrami (LB) operator on the point cloud manifold to control the shape of the manufactured point clouds and avoid the costly registration process used in other common approaches. \cite{doi:10.1080/00401706.2021.1883482} used dissimilarity-based manifold learning to obtain a manifold parameterization of variational patterns.

The problem setting in this paper, however, is different as we focus on modeling point cloud variations as a function of scalar covariates. One straightforward approach is to first use manifold learning to reduce the dimension of the response, followed by regressing the reduced dimensions on covariates. The drawback of this approach is that it fails to consider the relationship between covariates and the point cloud response during the dimension reduction process. To the best of our knowledge, this is the first work taking covariates information into consideration for learning LD manifolds. Inspired by the Maximum Variance Unfolding \citep{666666}, our proposed model unfolds the data manifold by maximizing the Frobenius norm of variance-covariance matrix between regressors (covariates) and the point cloud response. The method is thus coined as Maximum Covariance Unfolding (MCU). In this way, the local geometric information of the point cloud responses and corresponding regressors are considered in dimension reduction, leading to more effective learning and scalability.




 The remainder of the paper is organized as follows. Section 2 introduces the MCU as an optimization problem and elaborates the MCU regression approach. Section 3 validates the proposed methodology and compares its performance with benchmark methods by using two different simulated data sets. Section 4 includes a case study on process modeling and optimization in manufacturing of metal brackets. In section 5, the paper is concluded and future research directions are given.

\section{Maximum Covariance Unfolding Regression for Process Modeling and Optimization}

In this section, we elaborate the new manifold learning method, MCU, used for building a regression model for point clouds. The proposed MCU can achieve nonlinear dimensionality reduction on point clouds data while utilizing the covariates  information. 
Suppose there is a set of explanatory variables/covariates denoted by $x\in \mathbb{R}^P$ used to explain the variations of a point cloud response $y$. Also, assume each point cloud includes $n$ points denoted by $y \in \mathbb{R}^{n\times d}$, where each row indicates the coordinate of the point in $d$-dimensional space. After concatenating $y$ along the coordinates of the point cloud, it becomes a flattened vector $y \in \mathbb{R}^M$, where $M=nd$. Since $M$ is usually of high dimensions, dimension reduction is required before building the regression model. In this paper, we assume $y$ has nonlinear variational pattern over varying $x$, and lies on a low-dimensional manifold. Moreover, we assume that the data manifold is sufficiently smooth and densely sampled so that it has local Euclidean structure.

\subsection{MCU Manifold Learning and Regression}

Suppose a training sample of $N$ pairs of  $\{(x^{(i)},y^{(i)})|\ x^{(i)}\in \mathbb{R}^P, y^{(i)} \in \mathbb{R}^M, i=1,...,N\}$ is available, where $P$ is the number of process variables. Our goal is to reduce the HD manifold that includes the set of point clouds $\{y^{(i)}; i=1,...,N\}$ into some LD representation $\tilde{y} \in \mathbb{R}^{\tilde{M}}; \tilde{M}\ll M$, while taking the information of $x$ into consideration. Specifically, inspired by Maximum Variance Unfolding \citep{666666}, we aim to find $\tilde{y}^{(1)},...,\tilde{y}^{(N)}$ such that $||\tilde{y}^{(i)}-\tilde{y}^{(j)}||_2=||{y}^{(i)}-{y}^{(j)}||_2$ for all $(i,j) $ in the neighborhood graph $E$, and at the same time, the sum of squared covariance of all predictor-response pairs, i.e., $\sum_p\sum_m cov(x_p,\tilde{y}_m)^2$, is maximized. Here $x_p$ denotes the variable $p$ in $x$ and $\tilde{y}_m$ indicates the dimension $m$ of the LD space. Mathematically, this can be written as 

\begin{equation}
\label{eq:0}
\begin{aligned}
\max_{\tilde{y}^{(i)}}\quad& \sum_{p=1}^P\sum_{m=1}^{\tilde{M}} cov(x_p,\tilde{y}_m)^2 \\
\text{s.t.}\quad& ||\tilde{y}^{(i)}-\tilde{y}^{(j)}||_2^2=||{y}^{(i)}-{y}^{(j)}||_2^2\; \text{for}\  (i,j)\in E
\end{aligned}
\end{equation}
\\
The intuition behind this is to maximize the covariance information between explanatory variables $x_p$ and the transformed response variable $\tilde{y}_m$, while preserving the local Euclidean structure of the data manifold. The latter is achieved by ensuring that local distances in the original and transformed manifolds are the same.  We construct the neighborhood graph $E$ as the $k-$nearest neighborhood graph such that an edge $(i,j)$ belongs to $E$ if $i$ is among $j$'s k-nearest neighbors and vice versa. Although we use $k$-nearest neighborhood graph in this paper, the $\epsilon$-neighborhood can also be used for constructing the graph. The term $\emph{Maximum Covariance Unfolding}$ comes from the fact that the above optimization problem unfolds the data manifold to maximize the covariance metric, while preserving the local data connectivity.

 There is one potential obstacle in solving the above optimization problem, which also exists in the Maximum Variance Unfolding method. Insufficient sample size can violate the assumption that the manifold is densely sampled, leading to an unbounded objective function. Intuitively, the neighborhood graph of a sparsely sampled manifold can have more than one set of disconnected cliques. Thus, during the unfolding process, the disconnected cliques can be pushed away so far that it results in an unbounded objective function. In some applications, due to limited resources, the available sample size ($N$) is small or moderate, which results in sparsely sampled manifolds. 
 To deal with this issue, we pose an artificial upper bound on the sum of variances of each reduced response variable so that the disconnected nodes cannot be pushed away indefinitely. This can be translated to the following constraint.
 
 \begin{equation}
 \label{eq:1}
     \sum_{m=1}^{\tilde{M}} var(\tilde{y}_{m}) \leq C,
 \end{equation}
 
 where $C$ is a constant defined by the user. A heuristic approach for finding suitable $C$ is given in section \ref{sectionC}. In the following proposition, we prove that adding the constraint in (\ref{eq:1}) leads to an upper bound on the objective function.

 \begin{prop}
The objective function $\sum_{p=1}^P\sum_{m=1}^{\tilde{M}} cov(x_p,\tilde{y}_m)^2$ is bounded above if
$\sum_{m=1}^{\tilde{M}} var(\tilde{y}_{m}) \leq C$, where $C$ is a constant.
\end{prop}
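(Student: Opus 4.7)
The plan is to apply the Cauchy--Schwarz inequality pointwise to each covariance term, and then bound the resulting double sum using the variance constraint together with the fact that the covariates $x_p$ are fixed data (so their variances are constants independent of the optimization variables $\tilde{y}^{(i)}$).

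First I would invoke the Cauchy--Schwarz inequality for random variables, which gives $cov(x_p,\tilde{y}_m)^2 \leq var(x_p)\,var(\tilde{y}_m)$ for every pair $(p,m)$. Summing this inequality over all $p=1,\dots,P$ and $m=1,\dots,\tilde{M}$ and using the fact that the right-hand side factors across the two indices, I would obtain
\begin{equation*}
\sum_{p=1}^P \sum_{m=1}^{\tilde{M}} cov(x_p,\tilde{y}_m)^2 \;\leq\; \left(\sum_{p=1}^P var(x_p)\right)\left(\sum_{m=1}^{\tilde{M}} var(\tilde{y}_m)\right).
\end{equation*}

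Next I would use the assumption $\sum_{m=1}^{\tilde{M}} var(\tilde{y}_m)\leq C$ to bound the second factor by $C$. The first factor, $\sum_{p=1}^P var(x_p)$, depends only on the observed covariate data, which are fixed inputs to the optimization, so it is a finite constant; denote it by $V_x$. Combining, I get the explicit upper bound
\begin{equation*}
\sum_{p=1}^P \sum_{m=1}^{\tilde{M}} cov(x_p,\tilde{y}_m)^2 \;\leq\; C\,V_x,
\end{equation*}
which proves boundedness.

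I do not anticipate any real obstacle: the argument is essentially a one-line application of Cauchy--Schwarz followed by separation of the sum into an $x$-part and a $\tilde{y}$-part. The only subtlety to mention is that $var(x_p)$ must be interpreted as the empirical variance over the training sample (so that it is a well-defined constant that does not depend on the decision variables $\tilde{y}^{(i)}$), which is consistent with the empirical covariance being used as the objective in problem~(\ref{eq:0}).
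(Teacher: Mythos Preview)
Your proof is correct. Both your argument and the paper's rest on the same underlying inequality---that $cov(x_p,\tilde{y}_m)^2 \leq var(x_p)\,var(\tilde{y}_m)$, i.e.\ that correlations are bounded in magnitude by $1$---but the paper packages it differently: it rewrites the objective as $\|cov(X,\tilde{Y})\|_F^2$, factors $cov(X,\tilde{Y}) = Diag^{1/2}(var(X))\,corr(X,\tilde{Y})\,Diag^{1/2}(var(\tilde{Y}))$, and then applies submultiplicativity of the Frobenius norm to the three factors separately. Your elementwise route is more elementary and actually yields the sharper constant $C\,V_x$, whereas the matrix route picks up an additional factor from $\|corr(X,\tilde{Y})\|_F^2 \leq P\tilde{M}$. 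Either way the conclusion (boundedness) follows, and your remark that $var(x_p)$ must be read as the empirical variance of the fixed covariate data is exactly what the paper relies on as well.
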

 All proofs are given in the appendix. 
 
 Without loss of generality, we add an additional constraint on the LD embeddings $\tilde{y}$ that they are centered at 0, i.e., $\sum_{i=1}^N \tilde{y}^{(i)} = 0$. This will simplify solving the optimization problem. Therefore, the MCU problem formulation can be rewritten as 
 \begin{equation}
 \label{eq:2}
\begin{aligned}
\max_{\tilde{y}^{(i)}}\quad& \sum_{p=1}^P\sum_{m=1}^{\tilde{M}} cov(x_p,\tilde{y}_m)^2 \\
\text{s.t.}\quad& \sum_{i=1}^N \tilde{y}^{(i)} = 0\\
& \sum_{m=1}^{\tilde{M}} var(\tilde{y}_{m}) \leq C\\
& ||\tilde{y}^{(i)}-\tilde{y}^{(j)}||_2^2=||{y}^{(i)}-{y}^{(j)}||_2^2\; \text{for}\  (i,j)\in E
\end{aligned}
\end{equation}

In order to efficiently solve the above optimization problem, we will transform it to a semi-definite programming (SDP).

\subsubsection{Semi-definite Programming (SDP) Formulation}. 
We combine the data $(x^{(i)}, y^{(i)})$ across samples into the predictors matrix $X_{N\times P}$ and response matrix $Y_{N\times M}$, respectively. Similarly, we define $\tilde{Y}_{N\times \tilde{M}}$ as the matrix of LD embedded point clouds. Assuming that $x$ is centered around 0, we can reformulate problem (\ref{eq:2}) using the following proposition.

\begin{prop}
Assume $x$s are centered at 0, then Problem (\ref{eq:2}) is equivalent to 
\begin{equation}
\label{eq:3}
\begin{aligned}
\max_{\tilde{Y}}\quad& tr(XX^T\tilde{Y}\tilde{Y}^T) \\
\text{s.t.}\quad& tr(\mathbf{1}\mathbf{1}^T\tilde{Y}\tilde{Y}^T) = 0\\
& tr\left(\tilde{Y}\tilde{Y}^T\right) \leq (N-1)C\\
& (\tilde{Y}\tilde{Y}^T)_{ii}+(\tilde{Y}\tilde{Y}^T)_{jj} -2 (\tilde{Y}\tilde{Y}^T)_{ij}=({Y}{Y}^T)_{ii}+({Y}{Y}^T)_{jj} -2 ({Y}{Y}^T)_{ij}\; \ \text{for}\  (i,j)\in E
\end{aligned}
\end{equation}

where $\mathbf{1}$ is a vector of size $N\times 1$ consisting of all 1's, $A_{ij}$ denotes the $(i,j)^{th}$ entry of matrix $A$ and $tr(A)=\sum_{i}A_{ii}$ is the trace of matrix $A$.
\end{prop}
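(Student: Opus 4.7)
The plan is to rewrite each ingredient of problem (\ref{eq:2}) into its matrix-trace form by exploiting the centering assumption on $x$ and the centering constraint on $\tilde{y}$, and then verify that the two programs have the same feasible set and the same argmax (their objective values differing only by a positive scalar, which I would absorb).

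First I would handle the objective. Since $x$ is centered at zero by assumption and, by the first constraint in (\ref{eq:2}), the columns of $\tilde{Y}$ also have sample mean zero, each covariance collapses to a plain inner product: $\mathrm{cov}(x_p,\tilde{y}_m)=\tfrac{1}{N-1}\sum_i x_p^{(i)}\tilde{y}_m^{(i)}=\tfrac{1}{N-1}(X^T\tilde{Y})_{pm}$. Summing the squares therefore yields $\sum_{p,m}\mathrm{cov}(x_p,\tilde{y}_m)^2=\tfrac{1}{(N-1)^2}\|X^T\tilde{Y}\|_F^2=\tfrac{1}{(N-1)^2}\mathrm{tr}(X^T\tilde{Y}\tilde{Y}^TX)$, and then the cyclic property of the trace gives $\tfrac{1}{(N-1)^2}\mathrm{tr}(XX^T\tilde{Y}\tilde{Y}^T)$. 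The positive constant $1/(N-1)^2$ does not affect the optimizer, so the objective of (\ref{eq:3}) is obtained.

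Next I would rewrite each constraint. The centering constraint $\sum_{i}\tilde{y}^{(i)}=0$ can be written as $\mathbf{1}^T\tilde{Y}=0$, and squaring gives $\|\mathbf{1}^T\tilde{Y}\|_2^2=\mathrm{tr}(\tilde{Y}^T\mathbf{1}\mathbf{1}^T\tilde{Y})=\mathrm{tr}(\mathbf{1}\mathbf{1}^T\tilde{Y}\tilde{Y}^T)$; because this trace is a squared norm, it equals $0$ if and only if $\mathbf{1}^T\tilde{Y}=0$, so the constraints are equivalent. For the variance bound, the centering already established lets me drop the sample mean and write $\sum_{m}\mathrm{var}(\tilde{y}_m)=\tfrac{1}{N-1}\sum_{m,i}(\tilde{y}_m^{(i)})^2=\tfrac{1}{N-1}\mathrm{tr}(\tilde{Y}\tilde{Y}^T)$, so $\sum_{m}\mathrm{var}(\tilde{y}_m)\le C$ is exactly $\mathrm{tr}(\tilde{Y}\tilde{Y}^T)\le (N-1)C$. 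Finally, the local-isometry constraint rewrites directly via $\|\tilde{y}^{(i)}-\tilde{y}^{(j)}\|_2^2=(\tilde{Y}\tilde{Y}^T)_{ii}+(\tilde{Y}\tilde{Y}^T)_{jj}-2(\tilde{Y}\tilde{Y}^T)_{ij}$, and analogously for $y$ in terms of $YY^T$, which is precisely the stated constraint.

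I do not foresee a real obstacle here; the argument is a sequence of standard trace identities and Frobenius-norm manipulations. The only subtlety is the order of operations: the centering constraint on $\tilde{y}$ must be applied \emph{before} simplifying the covariance and variance expressions, since only after eliminating sample means do the sums reduce to inner products and squared norms. Once that is done, the equivalence between (\ref{eq:2}) and (\ref{eq:3}) is immediate term by term.
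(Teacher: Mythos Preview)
Your proposal is correct and follows essentially the same route as the paper: rewrite the objective via $\mathrm{cov}(X,\tilde{Y})=\tfrac{1}{N-1}X^T\tilde{Y}$ under centering, use $\|A\|_F^2=\mathrm{tr}(A^TA)$ and trace cyclicity, and then translate each constraint (centering, variance bound, local isometry) into its trace or Gram-matrix form by the same identities. Your remark about applying the centering constraint on $\tilde{y}$ before simplifying the covariance and variance expressions is exactly the one subtlety the paper relies on implicitly.
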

 
 To transform the optimization problem in (\ref{eq:3}) to an SDP we define $Q\in \mathbb{R}^{N\times N} := \tilde{Y}\tilde{Y}^T$, which is a positive semi-definite matrix, i.e. $Q\succeq 0$, and  optimize (\ref{eq:3}) with respect to $Q$ instead of $\tilde{Y}$. To simplify the notation, we similarly define $P \in \mathbb{R}^{N\times N} := YY^T$. Hence, Problem (\ref{eq:3}) becomes
 \begin{equation}
\label{eq:4}
\begin{aligned}
\max_{Q\in \mathbb{R}^{N\times N}}\quad& tr(XX^TQ) \\
\text{s.t.}\quad& Q\succeq 0\\
& tr(\mathbf{1}\mathbf{1}^TQ) = 0\\
& tr(Q) \leq (N-1)C\\
& Q_{ii}+Q_{jj} -2 Q_{ij}=P_{ii}+P_{jj} -2 P_{ij}\; \ \text{for}\  (i,j)\in E.
\end{aligned}
\end{equation}

This is an SDP problem, which can be solved using interior point methods \citep{dikin1967iterative} by off-the-shelf optimization solvers such as MOSEK \citep{mosek}. 

\subsection{Dimension Reduction and Regression using MCU\label{sectionC}}

After solving the optimization problem in (\ref{eq:4}) and finding optimal $Q$, we need to recover $\tilde{Y}$ from $Q$. As $Q$ is positive semi-definite, using eigen decomposition, i.e., $Q = U\Sigma U^T$, we can recover $\tilde{Y}=U\Sigma^{\frac{1}{2}}$. To incorporate dimension reduction, while recovering $\tilde{Y}$, we can truncate matrices $U$ and $V$, such that only the $\tilde{M}$ largest eigenvalues and their corresponding eigenvectors are kept. In this case, $\tilde{Y}$ is obtained by $U_{1:\tilde{M}}\Sigma^{\frac{1}{2}}_{1:\tilde{M}}$ which is unique up to right multiplication by any orthogonal matrix. The eigenvalues in $\Sigma$ should be ordered from the largest on the upper left to the smallest on the lower right to make sure the most information is captured in $\tilde{Y}$.\\ 

It should be noted that since $rank(Q)\leq min\{N,M\}$ and $\tilde{M}\leq rank(Q)$, it is true that $\tilde{M}\leq min\{N,M\}$. This means when number of observations $N<M$, the maximum dimensions that can be recovered from the unfolding process is limited by $N$. The remaining information is lost as a result of the fact that we only use the inner product information from the original data during the optimization. \\
 
After obtaining the LD manifold, $\tilde{Y}: N\times \tilde{M}$ by applying MCU, we can subsequently model it as a function of explanatory variables by regressing $\tilde{Y}$ on $X: N\times P$ using a coefficient matrix $B: P\times \tilde{M}$. i.e., $\tilde{Y} = X B + E$. This can be solved using the ordinary least square method $\hat{B} = (X^TX)^{-1}X^T\tilde{Y}$. A summary of the MCU manifold learning and regression algorithm is given in Algorithm \ref{alg:mcu}. Additionally, some practical considerations and details necessary to implement the algorithm are given below. 

\begin{algorithm}[H]
\caption{Maximum Covariance Unfolding Regression}\label{alg:mcu}
\begin{algorithmic}[1]
\Procedure{Maximum Covariance Unfolding Regression}{$X, Y, C, \tilde{M}, k, \lambda$}
    \State Standardize X column-wise with mean 0 and variance 1 
    \State Center Y with column mean 0 and scale Y by \(1/\frac{1}{M}\sum_{m=1}^M(\sqrt{var(y_m)})\)
    \State Solve the semi-definite programming in (\ref{eq:4}) and get optimal $Q$
    \State Get the eigen-decomposition $Q = U\Sigma U^T$ (eigenvalues ordered from large to small)
    \State Select the first $\tilde{M}$ columns of $U$ and $\Sigma$, and compute $\tilde{Y}=U_{1:\tilde{M}}\Sigma^{\frac{1}{2}}_{1:\tilde{M}}$
    \State Regress $\tilde{Y}$ on X without intercept and get regression coefficient $\hat{B}=(X^TX+\lambda I)^{-1}X^T\tilde{Y}$
\EndProcedure
\end{algorithmic}
\end{algorithm}

\begin{itemize}
    \item  \textbf{Standardization of $X$}. To satisfy the assumption used during the formulation that the sample mean of $X$ is zero, standardization of $X$ is needed so that each control variable $x_p$ has sample mean 0 and sample variance as. Scaling the sample variance is to ensure that the unfolding scales are the same in each direction of process variables and are not influenced by their magnitudes. As a result of centering $X$ and $\tilde{Y}$, the regression model has a 0 intercept.

\item \textbf{Choice of reduced dimensions for $\tilde{M}$}. $\tilde{M}$ could be chosen by the number of predictors or applying Otsu's method \citep{4310076} to the eigenvalues of $Q$ in the log scale. During the construction of $\tilde{Y}$ only eigenvalues above the Otsu threshold should be used. Log scale is preferred over the original scale because compared to the largest eigenvalues others may be too small, resulting in selecting only a few largest eigenvalues. In our numerical study, the reduced dimension is selected as the number of predictors.

\item \textbf{Choice of the neighbor number $k$}.
One can use cross-validation for choosing the neighbor number. Based on our numerical analysis and by evaluating the reconstruction and optimization errors, we find $k=4,5$ are the appropriate choices. 

\item \textbf{Find suitable upper bound C}. A reasonably large upper bound $C$ needs to be found to allow enough freedom for $Y$ to unfold. One heuristic approach is to first center $Y$ and then scale the whole matrix $Y$ by $1/\frac{1}{M}\sum_{m=1}^M(\sqrt{var(y_m)})$. In this way the average variance of response variables $var({y}_{m})$ becomes approximately $1$. It is reasonable to assume that each $var(\tilde{y}_{m})$ after unfolding becomes no smaller than $1$ as unfolding only increases the variance, and we can expect $tr(var(\tilde{Y}))=\sum_{m=1}^{\tilde{M}} var(\tilde{y}_{m}) \geq \tilde{M}$. We can then set $C = \alpha \tilde{M}$ where $\alpha \geq 1$. In this way, we transform the search of $C$ into the search of $\alpha$, independent of $Y$. This can facilitate the search for a suitable upper bound. In our numerical study we used $\alpha=100,000$ to allow more flexibility for unfolding.

\end{itemize}

\subsection{Predictive Model Optimization\label{sectionopt}}
Predictive optimization is a procedure to find the optimal values of the explanatory/control variables such that the predicted response is the closest to a nominal value/shape. Process optimization cannot be carried out in a direct way using MCU because the nonlinear relationship between $Y$ and $\tilde{Y}$ is lost during the optimization process and there is no explicit formula for modeling their connection. However, we can infer the LD nominal embedding ${\tilde{y}}_{nom}$ from the nominal $y_{nom}$ using an indirect optimization method. As the distances between neighboring points are preserved during the MCU, we can learn ${\tilde{y}}_{nom}$ from $y_{nom}$ using the training data $Y$ and $\tilde{Y}$ by solving the following optimization problem. 
\begin{equation}
{\tilde{y}}_{nom} = \underset{v\in \mathbb{R}^{M'}}{\mathrm{argmin}} \sum_{i\in E_{nom}}(||v - \tilde{y}^{(i)}||_2-||{y}_{nom}-{y}^{(i)}||_2)^2,\label{eq:po}
\end{equation}

where $E_{nom}$ is the index set of $k$-nearest neighbors of ${y}_{nom}$ among $y^{(i)}$ from ${Y}$.\\

As the optimal values of the explanatory variables, denoted by $x^*$, should result in a response close to its nominal value, we can replace ${\tilde{y}}_{nom}$ with $x\hat{B}$ in the optimization model in (\ref{eq:po}), where $\hat{B}$ is the estimated coefficients from MCU regression. Therefore, $x^*$ can be obtained by 
$$
x^*  = \underset{x\in \mathbb{R}^P}{\mathrm{argmin}} \sum_{i\in E_{nom}}(||x\hat{B} - \tilde{y}^{(i)}||_2-||{y}_{nom}-{y}^{(i)}||_2)^2.
$$

This final optimization problem can be solved using stochastic global optimization algorithms, like Dual Annealing \citep{XIANG1997216}.

\section{Methodology Validation Using Simulations}
In this section, we use simulated data to validate the MCU manifold learning and regression methodology and compare its performance with two benchmarks, namely, principal component analysis (PCA) and minimum variance unfolding (MVU). In this study, we use two metrics to evaluate the performance of the algorithms. First, we adopt embedding reconstruction error from regression. Using the regression coefficient $\hat{B}$, we can reconstruct the LD embedding $\hat{\tilde{y}}$ from $x$ to $\hat{\tilde{y}}=x\hat{B}$. The relative reconstruction error (RRE) between ${\tilde{y}}$ and $\hat{\tilde{y}}$, defined by = $\|\tilde{y}-\hat{\tilde{y}}\|_2/\|\tilde{y}\|_2$, can be used for evaluation of the algorithm to see how much covariance between $x$ and $\tilde{y}$ can be explained by a linear regression model. The second evaluation metric is the Euclidean deviation $\|x_{nom}-\hat{x}_{nom}\|_2$ between the true optimal $x_{nom}$ and the estimated optimal covariates $\hat{x}_{nom}$ obtained from process optimization. We evaluate the performance of the foregoing methods using two commonly used simulated data sets; namely, the Swiss roll point cloud and penny image manifold.

\subsection{Swiss Roll}
To generate the Swiss roll dataset, we begin with a 2D base plane generated by \cite{preswissroll}, and then fold it into a Swiss roll as shown in Figures \ref{fig:swiss1} and \ref{fig:swiss2}. We define a function $f: \mathbf{R}^2\rightarrow \mathbf{R}^3$ to map each point $(x,y)$ in the base plane to one point in the 3D Swiss roll via two control (explanatory) variables $c_1$ and $c_2$. They are used to control the shape of the generated Swiss roll samples. Specifically, $c_1$ is used to control the size of the Swiss roll and $c_2$ is to control the period of spiral (i.e., the degree of folding) via the foldrate $r:= 1+c_2/10$. The mapping function used to generate the 3D Swiss rolls is given below.
$$
f(x,y) = \left((4/9*c_1+50/9)x\text{cos}(2\pi r(x-4)/12),\ y,\ (4/9*c_1+50/9)x\text{sin}(2\pi r(x-4)/12)\right)
$$

\begin{figure}[htbp]
\begin{minipage}[t]{0.5\linewidth}
\centering
\includegraphics[scale=0.2]{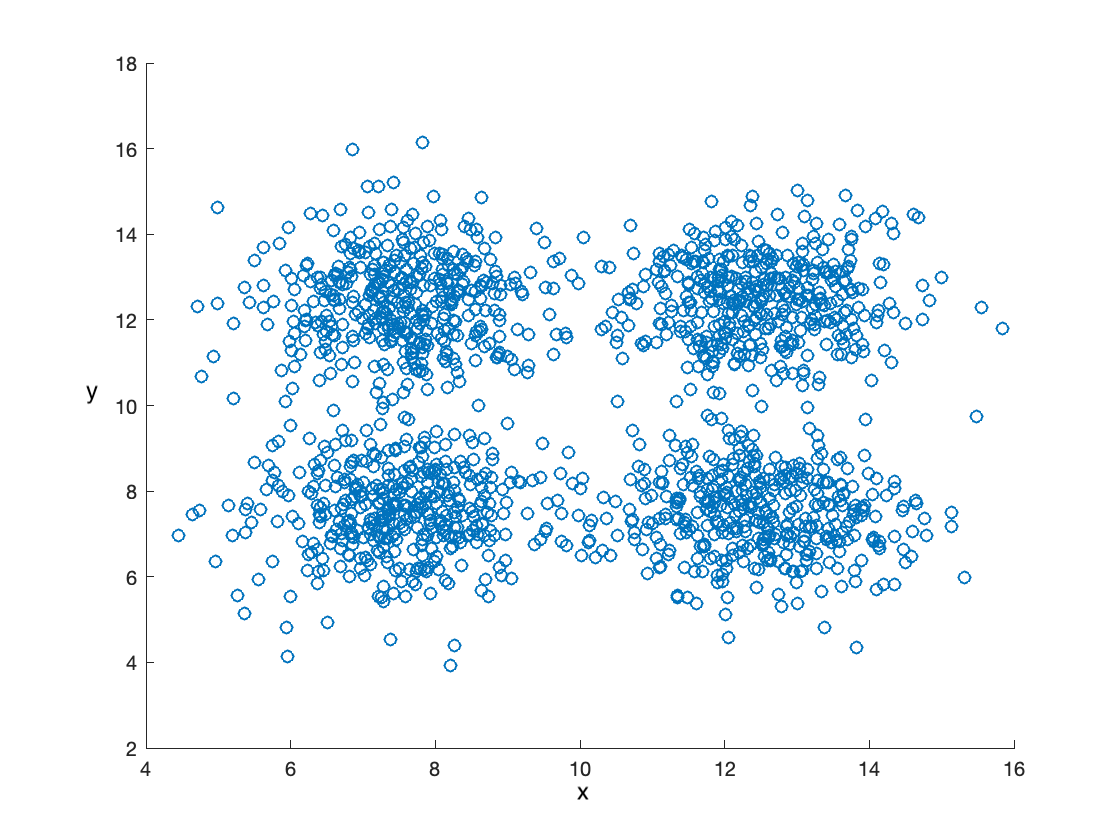}
    \caption{\label{fig:swiss1}Pre Swiss roll data}
\end{minipage}%
\begin{minipage}[t]{0.5\linewidth}
\centering
\includegraphics[scale=0.2]{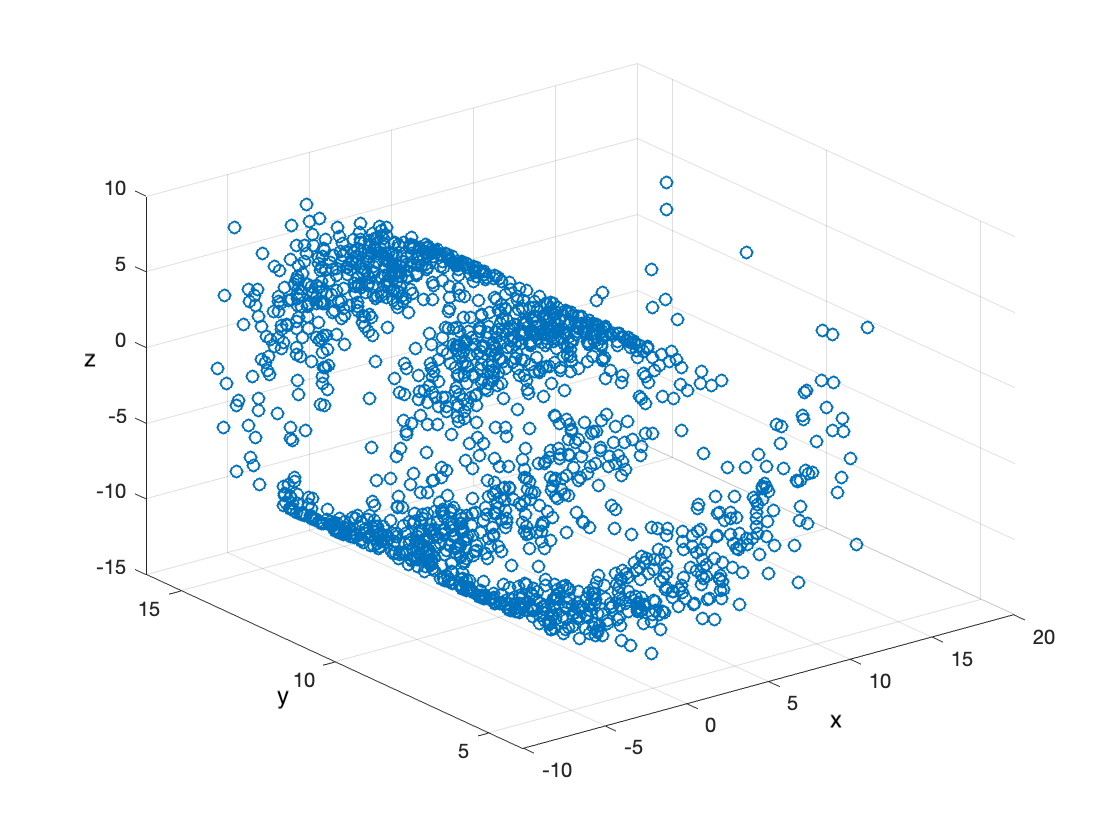}
    \caption{\label{fig:swiss2}Sample Swiss roll data}
\end{minipage}%
\end{figure}

We, then, generate 200 Swiss rolls by varying $c_1$ and $c_2$ randomly selected from $[1, 10]$. Each sample contains 1600 datapoints, so after concatenating the coordinates of these $1600$ datapoints in a row, the response matrix $Y$ is of size $N\times 4800$, where $N$ is the sample size. The explanatory variable matrix $X$ is of size $N\times 2$.  

To study the shape of the generated manifold data, we first apply a PCA to the set of generated Swiss roll samples and visualize the first 3 PC scores in Figure \ref{fig:swissmani}. The arrows on the plot show how the control variables $(c_1,c_2)$ change the coordinates of the samples on the generated manifold. As can be seen from the plot, the manifold has the shape of a half hollow cylinder, where $c_1$ controls the height of the cylinder and $c_2$ controls the circumference. From Figure \ref{fig:swissmani}, we can also see that the manifold has a nonlinear structure.

\begin{figure}[htbp]
\centering
\includegraphics[scale=0.6]{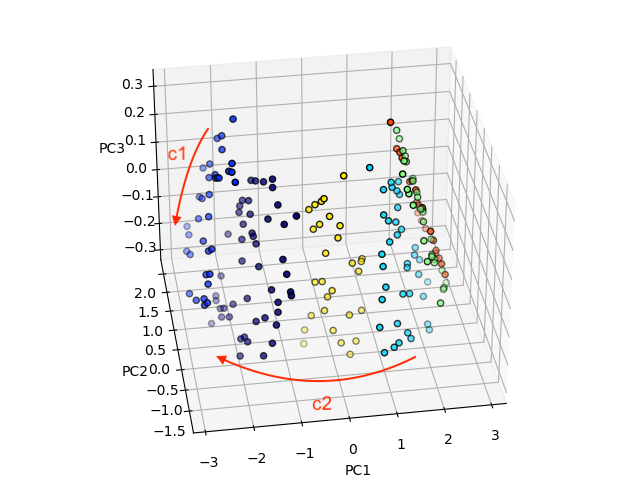}
    \caption{\label{fig:swissmani}A colormap(based on clustering) of manifold shape of Swiss roll data}
\end{figure}



Next, we apply the proposed MCU along with the benchmarks, i.e., PCA and MVU, and compare their performance. For the benchmarks, we first perform dimension reduction using MVU or PCA to get an LD representation $\tilde{Y}$, and then regress it on $X$. For all methods, the process optimization is carried out following the procedure described in Section \ref{sectionopt}. A visual comparison of the LD representation $\tilde{Y}$ along with the reconstructed $\tilde{Y}$ from the linear regression on $X$ is shown in Figure \ref{fig:swisscomp}. In this figure, the first row shows the unfolded manifolds to a 2D coordinates, i.e., $\tilde{Y}$, using MCU, MVU, and PCA. The second row shows the predicted $\tilde{Y}$ from the linear regression model. We use a colormap to facilitate understanding the unfolding process by comparing them with the color of the manifold shown in Figure \ref{fig:swissmani}.

Note that an appropriate manifold learning/unfolding method should unfold the manifold to a rectangular shape so its variation can linearly be explained by the explanatory variables. As can be seen from the plots,  only MCU can unfold the manifold into a rectangular shape whose sides are in parallel with the $x$ and $y$ axes. This is mainly because the MCU finds an LD representation that maximizes the covariance between the control variables and unfolded manifold. PCA has the worst unfolding performance as the unfolded manifold has a nonlinear relationship with the control variables. 

Figure \ref{fig:swisscompdist} also shows the pointwise deviation between $\tilde{Y}$ and reconstructed $\tilde{Y}$ plotted on a heatmap. From this figure, it is clear that the resulting regression model by the MCU and MVU clearly has lower residuals and higher predictability. To compare the prediction performance, we also plot the boxplots of the RRE obtained by each method in Figure \ref{fig:swissregcomp}. As can be seen from the figure, both MCU and MVU outperform PCA. Additionally, although the median for MCU and MVU are similar, the Inter Quartile Range (IQR) of MCU (0.1538) is smaller than that of MVU (0.1781). \\


\begin{figure}[htbp]
    \centering
    \includegraphics[scale=0.45]{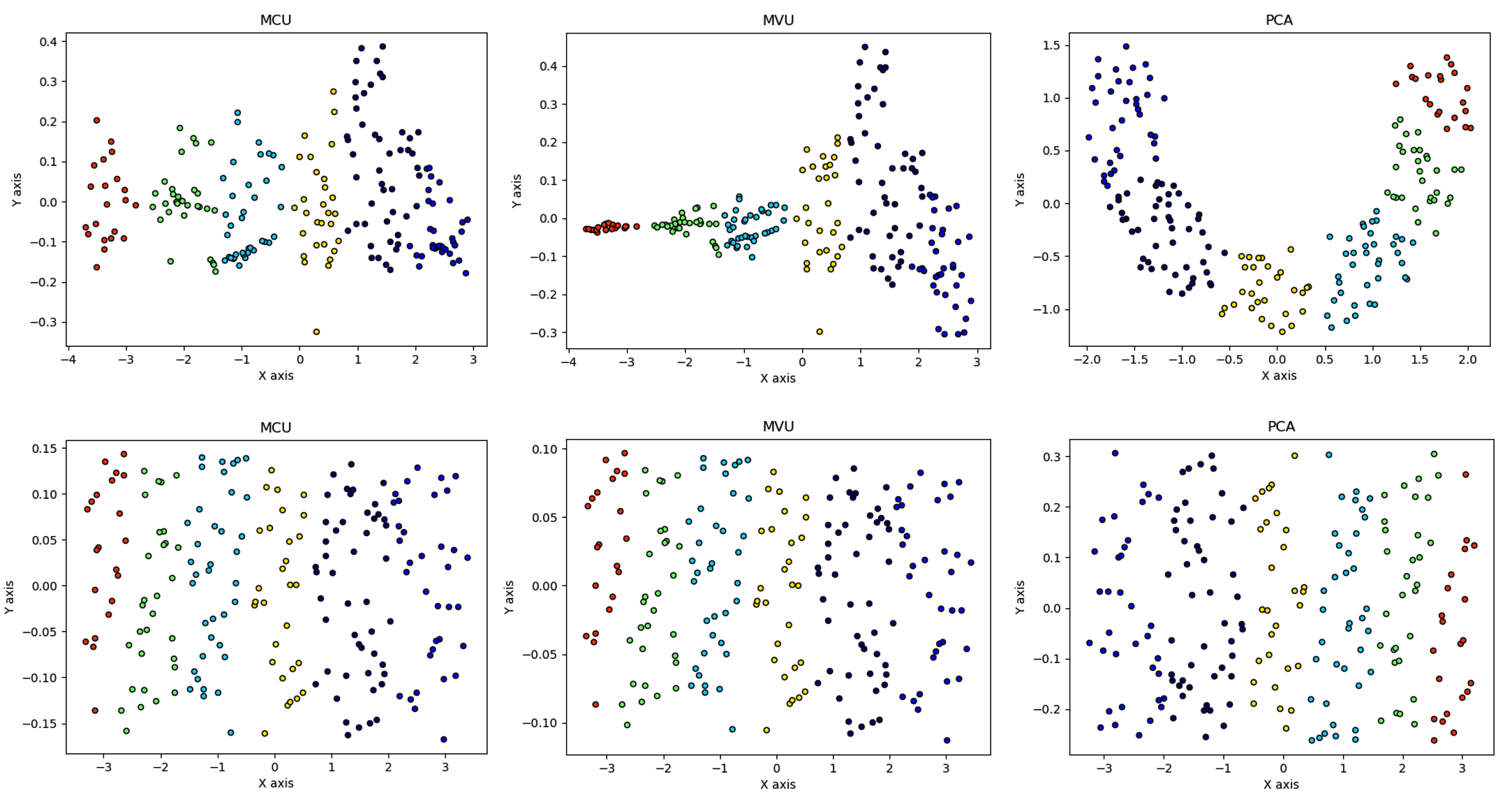}
    \caption{\label{fig:swisscomp}A visual comparison of three methods in terms of unfolding on Swiss roll data (scaled)}
\end{figure}{} 

\begin{figure}[htbp]
    \centering
    \includegraphics[scale=0.45]{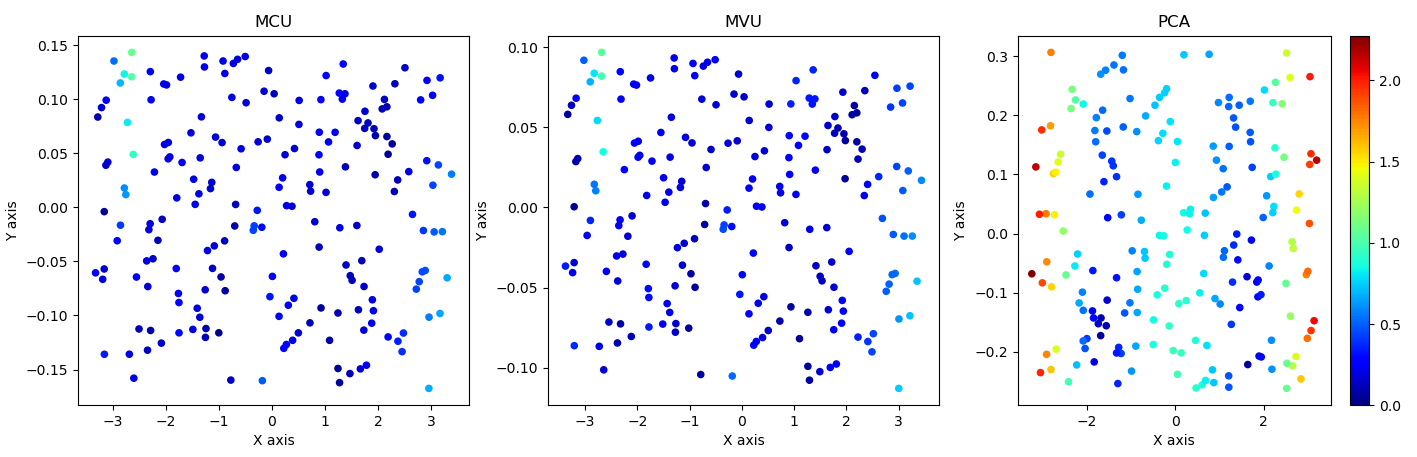}
    \caption{\label{fig:swisscompdist}A visual comparison of three methods in terms of pointwise $\tilde{Y}$ reconstruction deviation (scaled)}
\end{figure}{} 

\begin{figure}[htbp]
\centering
\includegraphics[scale=0.5]{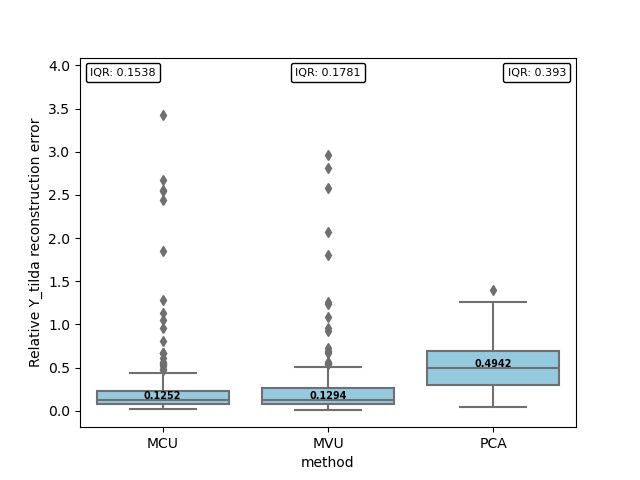}
    \caption{\label{fig:swissregcomp}A comparison of $\tilde{Y}$ reconstruction error on Swiss roll data}
\end{figure}

Finally, we test our algorithm on a unseen nominal response to see if we can successfully recover the optimal control variables via the proposed process optimization. To this end, we generate a nominal response vector, denoted by $y_{nom}$, and record the corresponding control variables used for generating the nominal response as the true optimal setting, denoted by $x_{nom}$. Based on this nominal sample, we run our process optimization algorithm to see how close the computed optimal setting $\hat{x}_{nom}$ are to the true optimal setting $x_{nom}$. We, first, evaluate the Euclidean norm between $\hat{x}_{nom}$ and $x_{nom}$ of the three methods as shown in Figure \ref{fig:swissXopt}. From the figure, it is clear that the optimal setting obtained by MCU is the closer to the true optimal setting than MVU and PCA by 18\%\ and 75\%\ . The pointwise deviations between the response $\hat{y}_{nom}$ under the computed optimal setting and the true nominal $y_{nom}$ are also shown in Figure \ref{fig:swissYopt} as a boxplot. As can be seen MCU with the median deviations of $10.04$ outperforms both benchmarks, MVU and PCA with the median deviations of $11.0$ and $28.5$, respectively.   

\begin{figure}[htbp]
\begin{minipage}[t]{0.5\linewidth}
\centering
\includegraphics[scale=0.5]{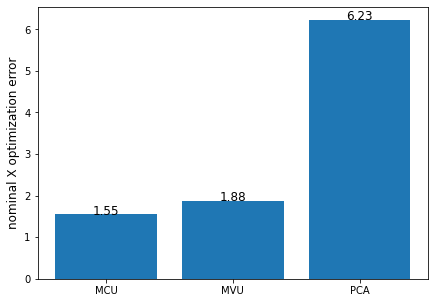}
    \caption{\label{fig:swissXopt}A comparison of deviation between $\hat{x}_{nom}$ and $x_{nom}$ on Swiss roll data}
\end{minipage}%
\begin{minipage}[t]{0.5\linewidth}
\centering
\includegraphics[scale=0.5]{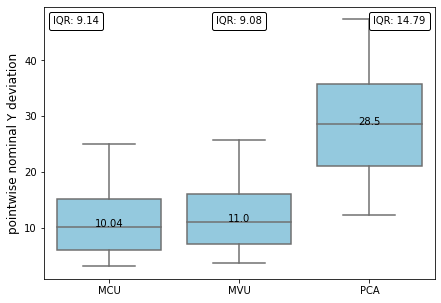}
    \caption{\label{fig:swissYopt}A comparison of pointwise deviation between $\hat{y}_{nom}$ and $y_{nom}$ on Swiss roll data}
\end{minipage}%
\end{figure}

\subsection{Penny Image}
To generate the Penny Image dataset, we begin with a base penny image as shown in Figure \ref{fig:penny1}, and apply two operations, namely, rotation and translation, to the image to generate variations. The two operations are defined via two control (explanatory) variables $c_1$ and $c_2$, where $c_1$ is used to control the rotational angle and $c_2$ is to control the translated position of the generated penny image samples. These operations are defined by
$$
rotational\ degree = -4(c_1-5) \ counterclockwise,
$$
$$
translation\ coordinates = \left(\frac{(c_2-5)l}{40}, \frac{l}{8}sin\left(\frac{\pi(c_2-5)}{10}\right)\right)/10,
$$
where $l$ is the length of the square penny image, which is 55 in our case. We, then, generate 100 independent penny images by varying $c_1$ and $c_2$ uniformly from the $[1, 10]$ interval. Each sample is a $55\times 55$ image, so after concatenating the intensity of these 3025 pixels in a row, the resulting response matrix $Y$ is of size $N\times 3025$, where $N = 100$ is the sample size. The explanatory variable matrix $X$ is of size $N\times 2$. A demonstration of generated pennies with varying $c_1,c_2$ is shown in Figure \ref{fig:penny2}.\\ 

\begin{figure}[htbp]
\centering
\includegraphics[scale=0.5]{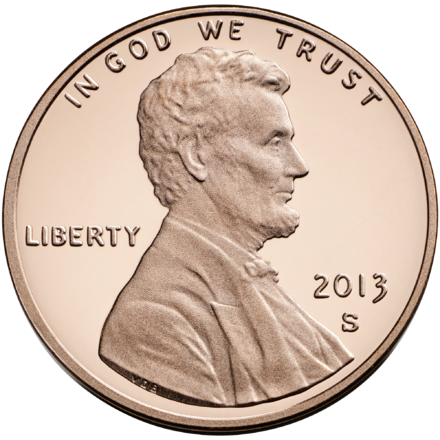}
    \caption{\label{fig:penny1}Original Penny Image}
\end{figure}

\begin{figure}[htbp]
\centering
\includegraphics[scale=0.25]{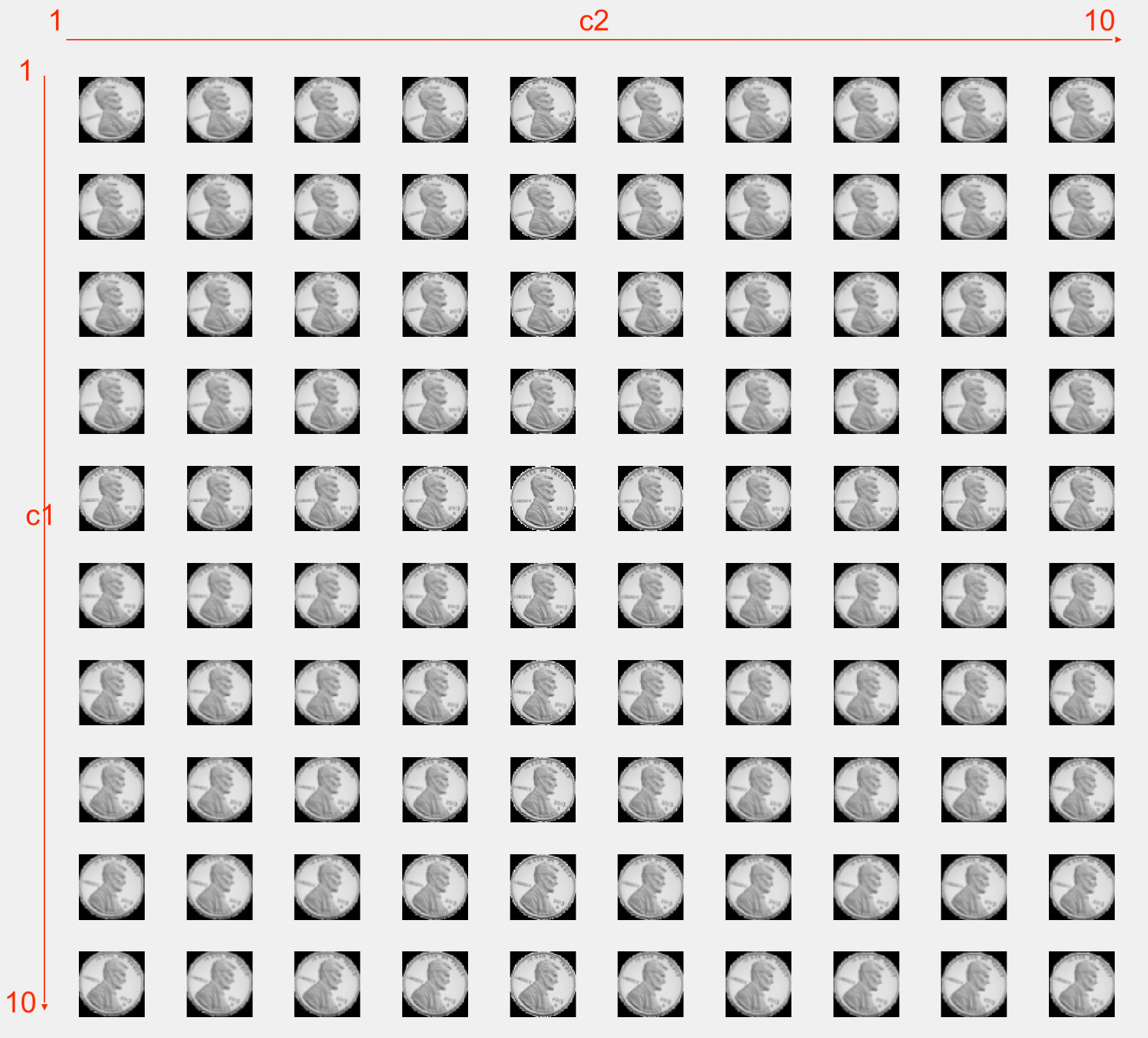}
    \caption{\label{fig:penny2}Penny Images with varying c1 and c2}
\end{figure}

To study the shape of the generated manifold, we first apply a PCA to the set of generated penny samples and visualize the first 3 PC scores in Figure \ref{fig:pennypc}. The arrows on the plot show how the control variables $(c_1,c_2)$ change the coordinates of the samples on the generated manifold. As can be seen from the plot, the manifold has a nonlinear shape like a hump, where $c_1$ controls the height of the hump and $c_2$ controls the width.\\

\begin{figure}[htbp]
\centering
\includegraphics[scale=0.4]{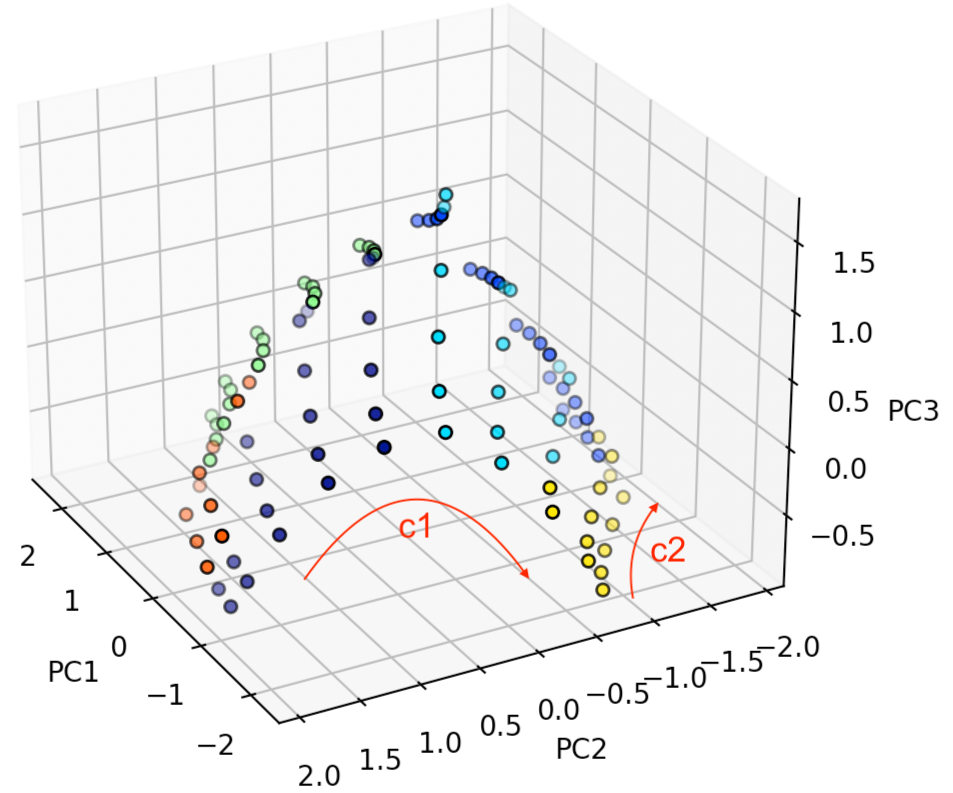}
    \caption{\label{fig:pennypc}A colormap(based on clustering) of manifold shape of Penny Image data}
\end{figure}



Next, we apply the proposed MCU along with the benchmarks, i.e. PCA and MVU, and compare their performance. For the benchmarks, we first perform dimension reduction using MVU or PCA to get LD features $\tilde{Y}$, and then regress them on $X$. For all the methods, the process optimization is carried out following the procedure described in Section \ref{sectionopt}. A visual comparison of LD representation $\tilde{Y}$ with reconstructed $\tilde{Y}$ from the linear regression on X is shown in Figure \ref{fig:pennycomp}. As can be seen from the plots, similar to the previous case, only MCU can unfold the manifold into a rectangular shape whose sides are in parallel with the $x$ and $y$ axes. PCA has the worst unfolding performance as the unfolded manifold has a nonlinear relationship with the control variables.

The pointwise residual plot in Figure \ref{fig:pennycompdist} indicates that the MCU clearly has lower residuals and higher predictability. To compare the prediction performance, we also plot the boxplots of RRE in Figure \ref{fig:pennyregcomp}. As can be seen from the figure, the proposed MCU outperforms MVU and PCA in both the median and IQR. For example, the median RRE of MCU is 0.035 compared with those of MVU and PCA, 0.081 and 0.119, respectively.   

\begin{figure}[htbp]
    \centering
    \includegraphics[scale=0.35]{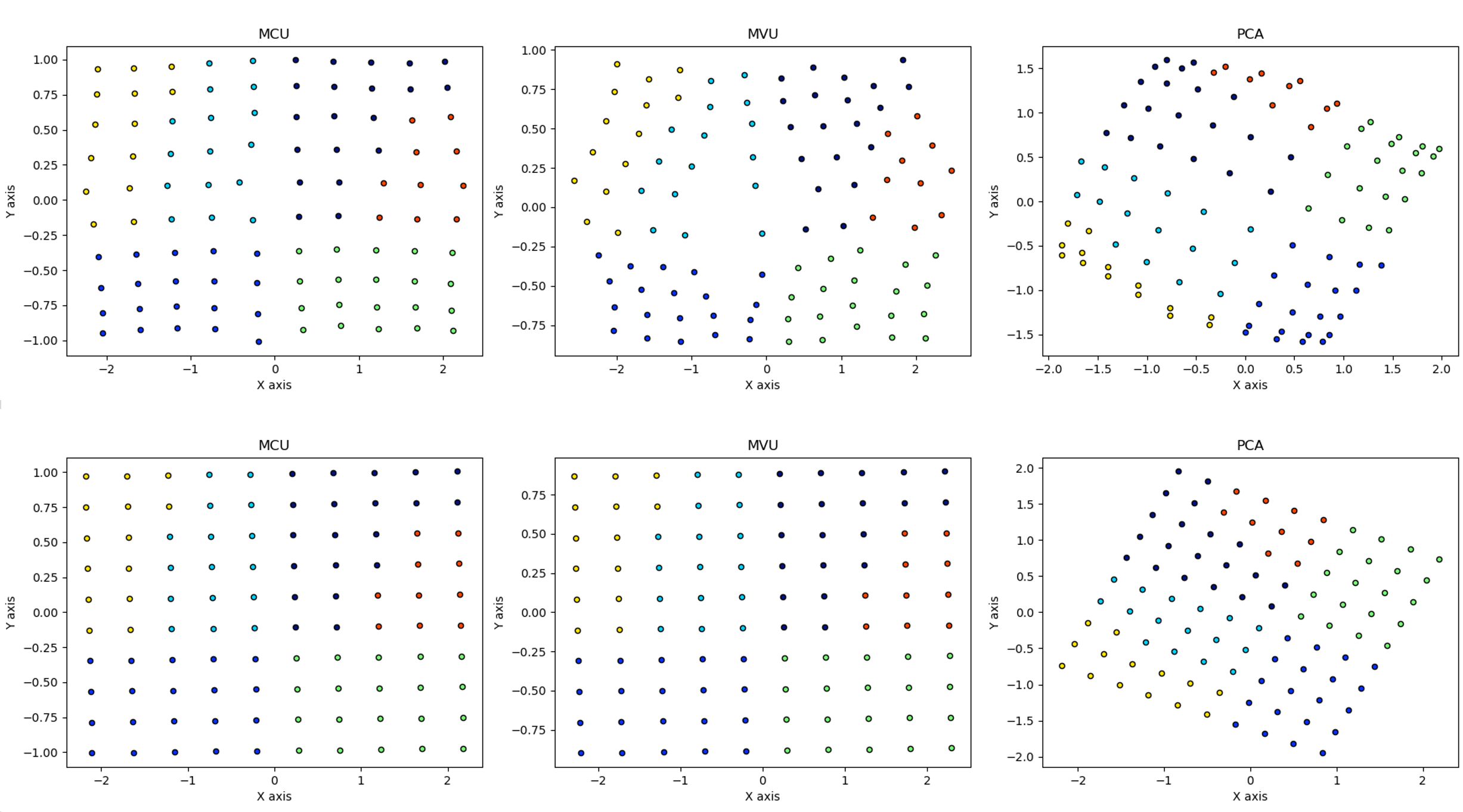}
    \caption{\label{fig:pennycomp}A visual comparison of three methods in terms of unfolding on penny image data (scaled)}
\end{figure}{} 

\begin{figure}[htbp]
    \centering
    \includegraphics[scale=0.45]{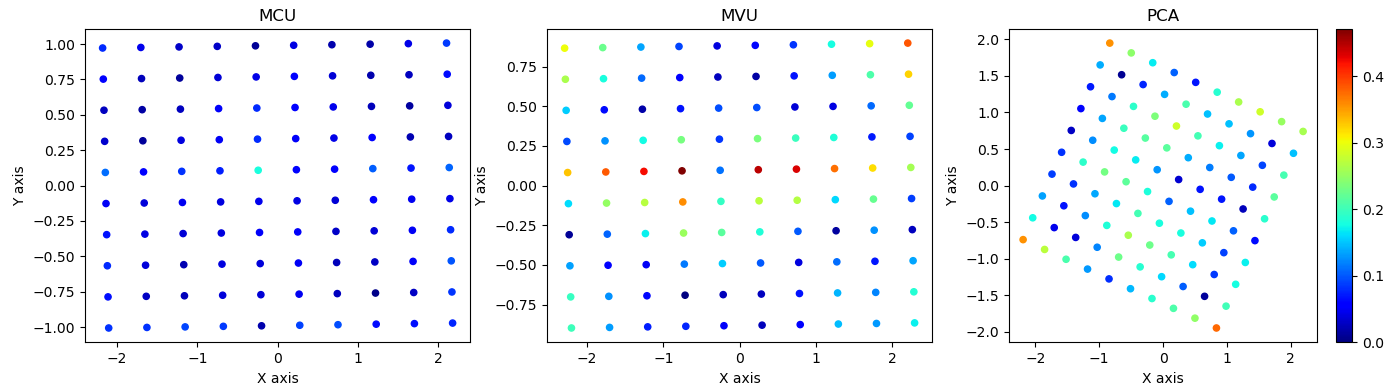}
    \caption{\label{fig:pennycompdist}A visual comparison of three methods in terms of pointwise $\tilde{Y}$ reconstruction deviation (scaled)}
\end{figure}{} 

\begin{figure}[htbp]
\centering
\includegraphics[scale=0.5]{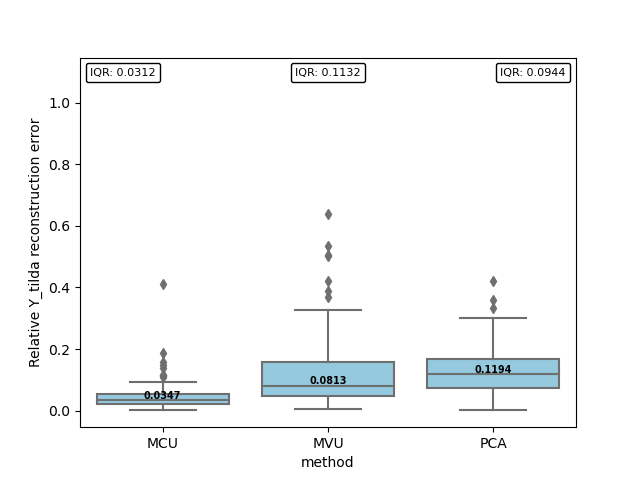}
    \caption{\label{fig:pennyregcomp}A comparison of $\tilde{Y}$ reconstruction error on Penny Image data}
\end{figure}

Next, we validate the proposed process optimization method following the procedure described in the Swiss roll case. Using the nominal $y_{nom}$, we compute the optimal setting for the control variables, $\hat{x}_{nom}$, and compare them with the true optimal, $x_{nom}$. We first compare the Euclidean norm between $\hat{x}_{nom}$ and $x_{nom}$ of all three methods shown in Figure \ref{fig:pennyXopt}. The MCU results in the closest optimal setting to the true one with the distance of 0.08 compared with 0.45 and 0.28 of MCU and PCA. We also study the pointwise deviation between the response $\hat{y}_{nom}$ and the true nominal $y_{nom}$ by plotting their boxplots in Figure \ref{fig:pennyYopt}. The boxplots show that MCU outperforms both MCU and PCA in terms of the median deviation as well as IQR. Note that since the pixels outside of the perimeter of the penny images (white pixels) are zero before and after the operations, those pixels are excluded when the boxplots are plotted to better visualize the performance of the methods. 

\begin{figure}[htbp]
\begin{minipage}[t]{0.5\linewidth}
\centering
\includegraphics[scale=0.5]{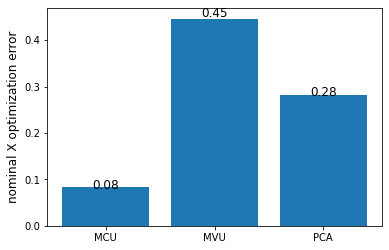}
    \caption{\label{fig:pennyXopt}A comparison of deviation between $\hat{x}_{nom}$ and $x_{nom}$ on Penny Image data}
\end{minipage}%
\begin{minipage}[t]{0.5\linewidth}
\centering
\includegraphics[scale=0.5]{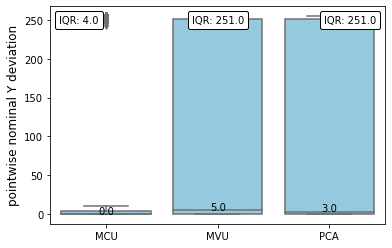}
    \caption{\label{fig:pennyYopt}A comparison of pointwise deviation between $\hat{y}_{nom}$ and $y_{nom}$ on Penny Image data}
\end{minipage}%
\end{figure}

\section{Case study}
In this section, we validate the proposed method using a case study on metal brackets manufacturing \cite{dataset}. Figure \ref{fig:bracketnom} shows the nominal dimensions of the metal bracket and Figure \ref{fig:bracketpc} shows the plot of one sample point cloud obtained from a manufactured bracket.  In this process, the point clouds obtained from the metal brackets were scanned via a Faro Laser ScanArm v2.0, \citep{dataset}. 

\begin{figure}[htbp]
\begin{minipage}[t]{0.5\linewidth}
\centering
\includegraphics[scale=0.35]{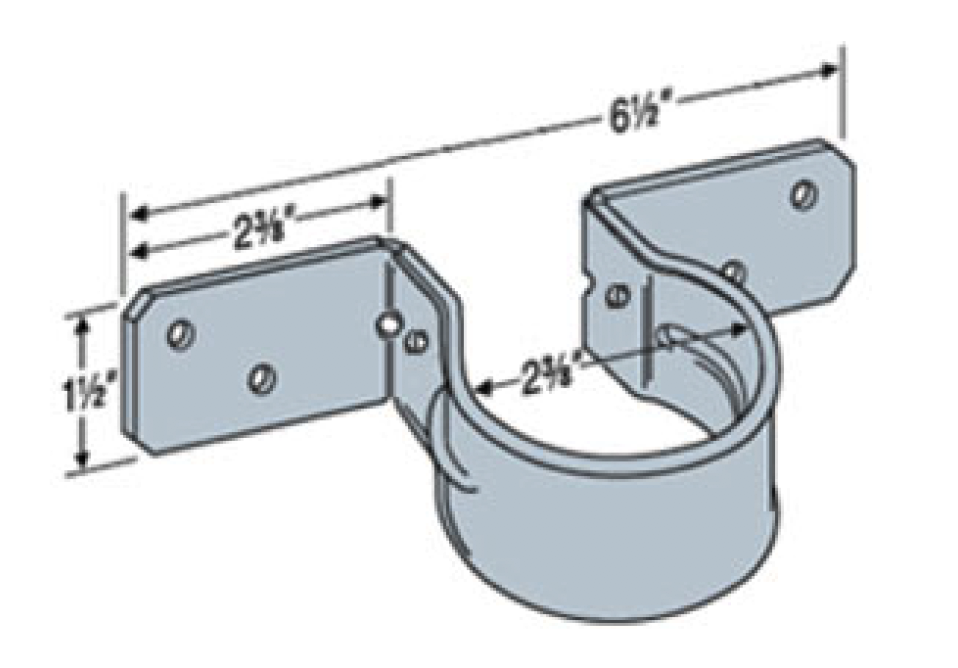}
    \caption{\label{fig:bracketnom}Nominal bracket}
\end{minipage}%
\begin{minipage}[t]{0.5\linewidth}
\centering
\includegraphics[scale=0.35]{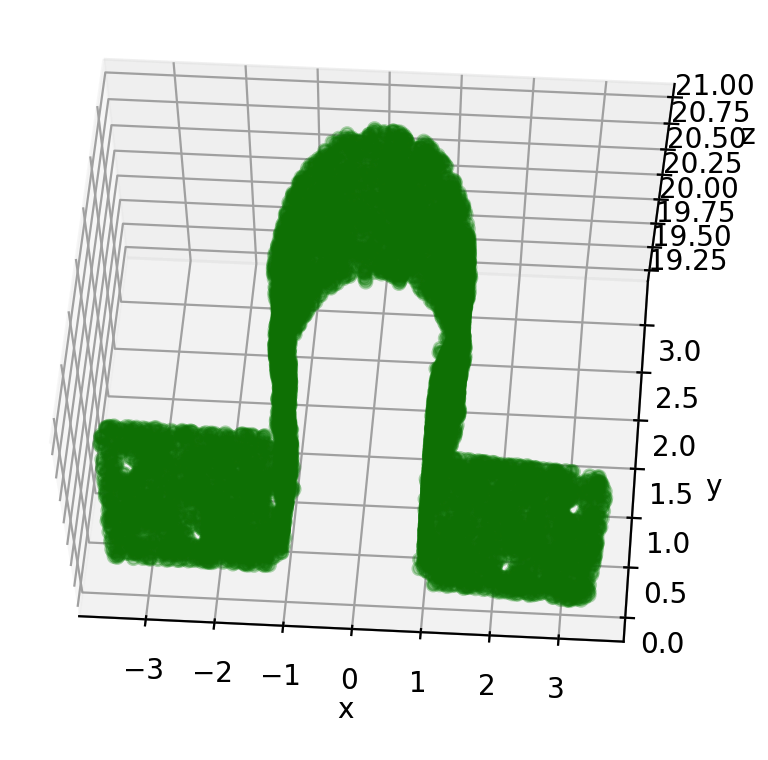}
    \caption{\label{fig:bracketpc}The point cloud data of one sample bracket}
\end{minipage}%
\end{figure}

We imitate the deformation mode described in  \cite{dataset} to generate a set of point clouds under different conditions using the nominal bracket in Figure \ref{fig:bracketpc}. For this purpose, we define a function $f: \mathbf{R}^3\rightarrow \mathbf{R}^3$ to create a deformed shape of the bracket via two explanatory variables $c_1$ and $c_2$. Specifically, $c_1$ is used to control the bending degree of the bracket’s mounting plate, and $c_2$ is used to control the $x$-coordinate of the bending pivot on the mounting plate. The deformation function used to generate the samples is given by
$$
f(x,y,z) = (x,\ y+10[1-cos((1+c_1/15)(x-pivot)/4)],z) \text{\ \ if\ }x>pivot,
$$
$$
f(x,y,z) = (x, y, z) \text{\ \ otherwise\ },
$$
where $pivot = 1+0.15c_2$.\\

A graphical illustration of the control variables' effect on the bending deformation can be seen in Figure \ref{fig:brktbend}, where the point cloud data along the $z$-axis are projected onto the $x$-$y$ plane for better visualization. By randomly selecting $c_1$ and $c_2$ from $[1, 10]$, we generate 200 bracket samples, each of which contains $10,000$ datapoints. The concatenation of the coordinates of these $10,000$ datapoints in a row results in a matrix $Y$ of size $200\times 30,000$, where $200$ is the sample size. The explanatory variable matrix $X$ is of size $200\times 2$.\\

\begin{figure}[htbp]
    \centering
    \includegraphics[scale=0.4]{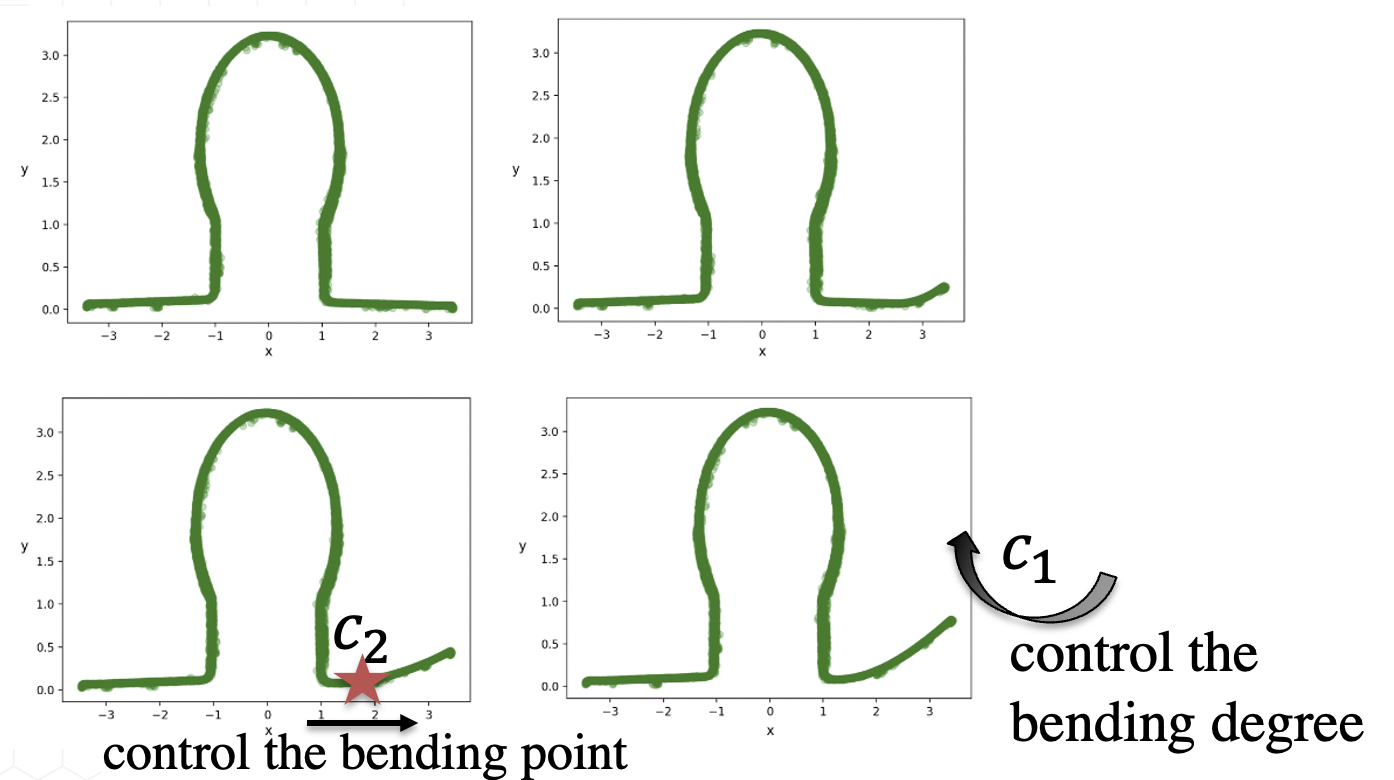}
    \caption{\label{fig:brktbend}A visualization of effects of control variables on metal bracket data}
\end{figure}{}

We first apply PCA to the group of generated bracket samples to investigate the variation pattern of the underlying manifold structure. The visualization of the first 3 PC scores is shown in Figure \ref{fig:brktmani}, where the arrows show the way the explanatory variables $(c_1,c_2)$ change the coordinates of the samples on the generated manifold. As can be seen from the plot, the manifold has the shape of a cloak, where $c_1$ controls the length of the cloak and $c_2$ controls the width.

\begin{figure}[htbp]
\centering
\includegraphics[scale=0.4]{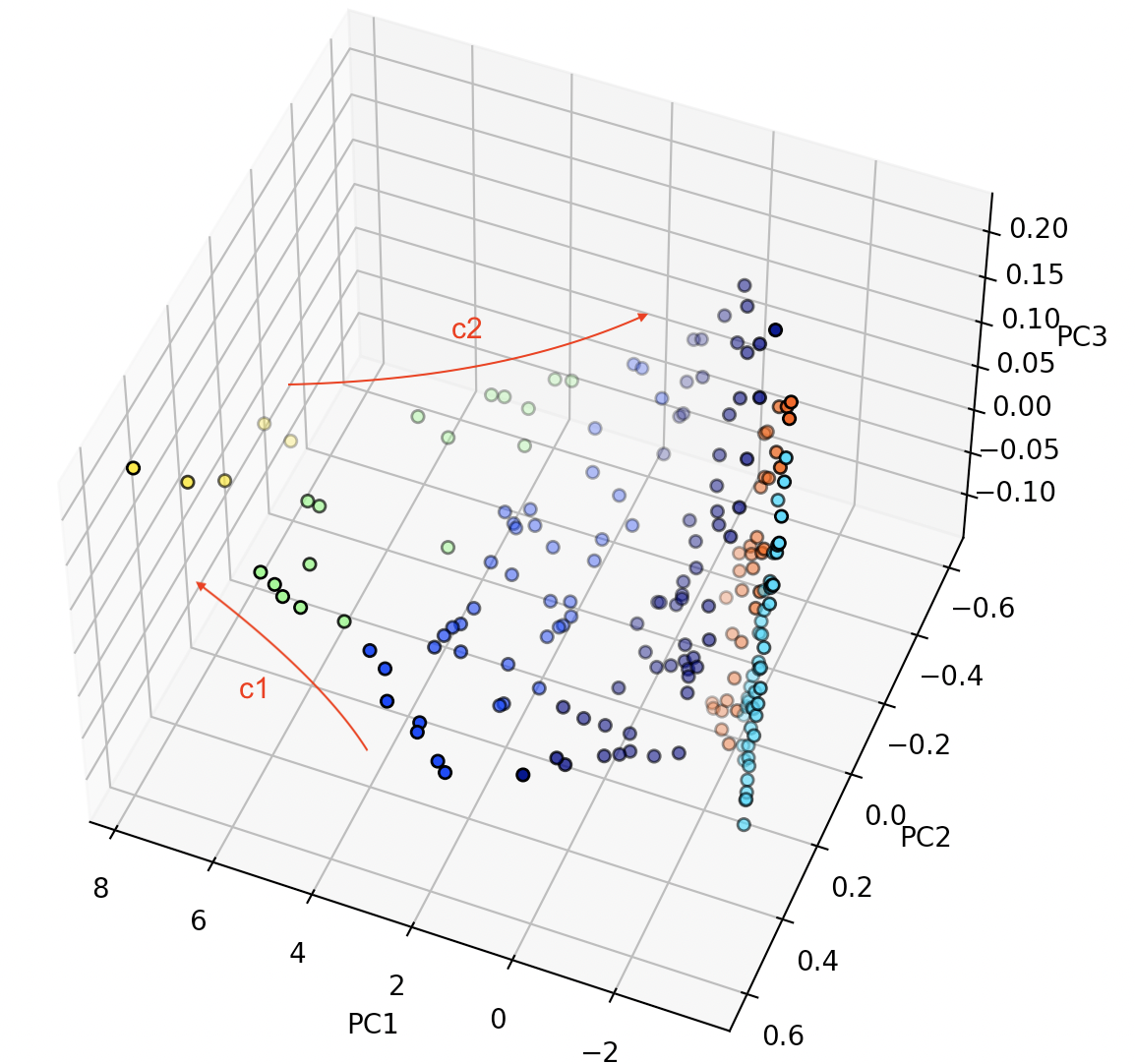}
    \caption{\label{fig:brktmani}A colormap(based on clustering) of manifold shape of bracket data}
\end{figure}


Next, we compare the performance of our proposed MCU with the benchmarks PCA and MVU on the generated data. For the benchmarks, we first reduce the data dimension using MVU/PCA before regressing the obtained LD representation $\tilde{Y}$ on $X$. In Figure \ref{fig:bracketcomp}, the first row shows the 2D unfolded manifolds $\tilde{Y}$ using MCU, MVU, and PCA, which gives us a high level understanding of the unfolding behavior. The second row shows the predicted $\tilde{Y}$ from the linear regression model on $X$. As dicussed earlier, a perfect manifold learning method should unfold the manifold to a rectangular shape so that its variation can be explained by the explanatory variables in a linear way. As can be seen from the plots, MCU unfolds the manifold into the most rectangular shape. Though this rectangle is not perfect and subject to a linear transformation, it can be captured by the subsequent linear regression on $X$. MCU has the best performance as a result of its capability of finding a LD representation that maximizes the covariance between the control variables and unfolded manifold. PCA has the worst unfolding performance as the unfolded manifold has a very nonlinear shape. 

\begin{figure}[htbp]
    \centering
    \includegraphics[scale=0.4]{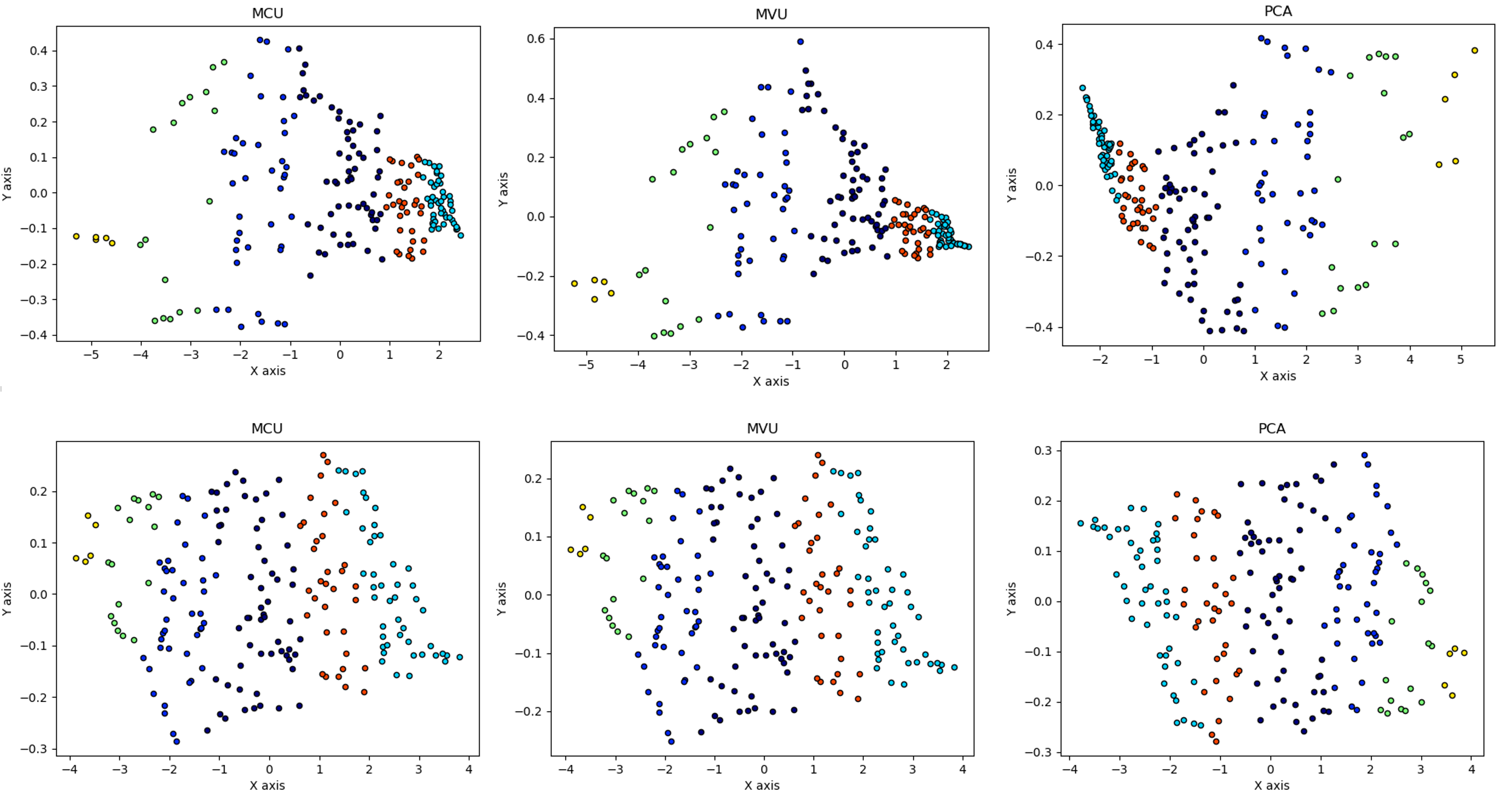}
    \caption{\label{fig:bracketcomp}A visual comparison of three methods in terms of unfolding on bracket data (scaled)}
\end{figure}{} 

\begin{figure}[htbp]
    \centering
    \includegraphics[scale=0.35]{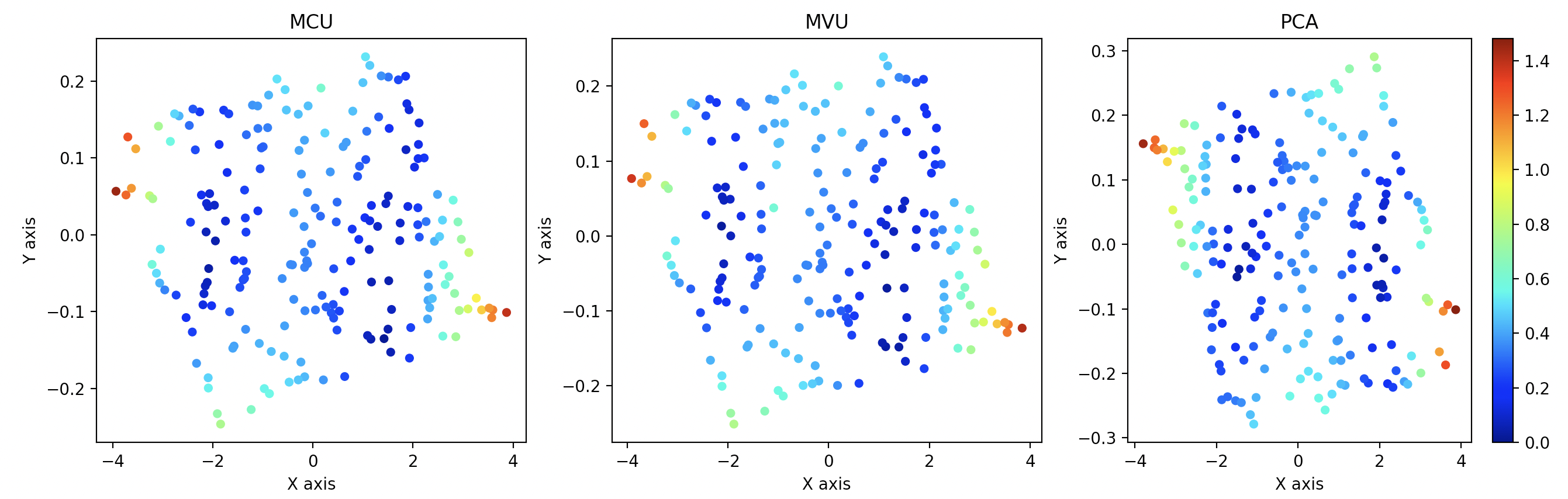}
    \caption{\label{fig:brktcompdist}A visual comparison of three methods in terms of pointwise $\tilde{Y}$ reconstruction deviation (scaled)}
\end{figure}{} 

The pointwise deviation between $\tilde{Y}$ and reconstructed $\tilde{Y}$ is plotted in Figure \ref{fig:brktcompdist} as a heatmap. Clearly the regression models from MCU and MVU have lower residuals and higher prediction accuracy. To quantify the prediction performance, the boxplots of the RRE = $\|\tilde{y}-\hat{\tilde{y}}\|_2/\|\tilde{y}\|_2$ of each method are plotted in Figure \ref{fig:brktregcomp}. From the figure, we can readily see that both MCU and MVU outperform PCA. In addition, the IQR of MCU (0.3024) is smaller than that of MVU (0.3344). 

\begin{figure}[htbp]
\centering
\includegraphics[scale=0.5]{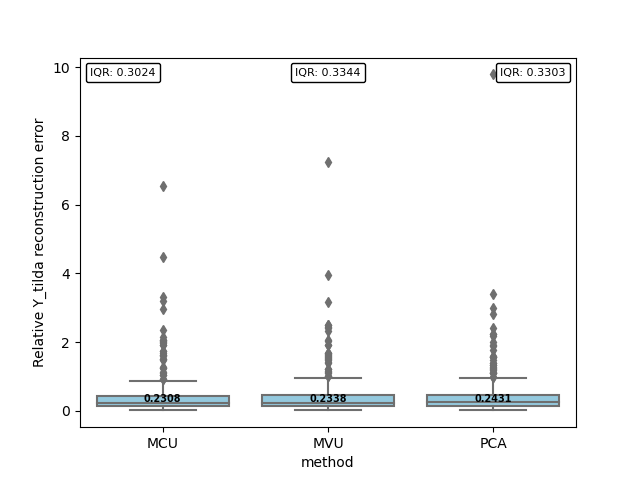}
    \caption{\label{fig:brktregcomp}A comparison of $\tilde{Y}$ reconstruction error on bracket data}
\end{figure}

Subsequently, we test our algorithm on the nominal response $y_{nom}$ to see if the optimal process setting $x_{nom}$ can be successfully recovered via a process optimization process as described in Section \ref{sectionopt}. We first evaluate the Euclidean distance between the learned control setting $\hat{x}_{nom}$ and the nominal $x_{nom}$ as in Figure \ref{fig:brktXopt}. As a second evaluation criterion, Figure \ref{fig:brktYopt} shows the boxplots of the pointwise deviation between the response $\hat{y}_{nom}$ under the optimized setting and the true nominal $y_{nom}$. Note that the locations of most of the points in the bracket are unaltered after the deformation except those on the right mounting plate. This leads to a large number of zero pointwise deviations in the boxplot. We, hence, filter them out and only include the deviations in the right mounting plate area for a better visualization. As can be seen from Figure \ref{fig:brktXopt} and \ref{fig:brktYopt}, MCU outperforms its counterpart MVU and the benchmark PCA in terms of both evaluation criteria.  

\begin{figure}[htbp]
\begin{minipage}[t]{0.5\linewidth}
\centering
\includegraphics[scale=0.5]{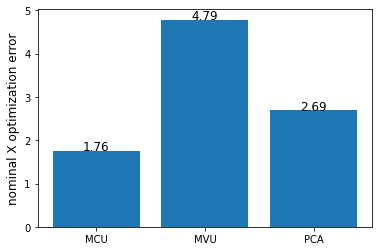}
    \caption{\label{fig:brktXopt}A comparison of deviation between $\hat{x}_{nom}$ and $x_{nom}$ on bracket data}
\end{minipage}%
\begin{minipage}[t]{0.5\linewidth}
\centering
\includegraphics[scale=0.5]{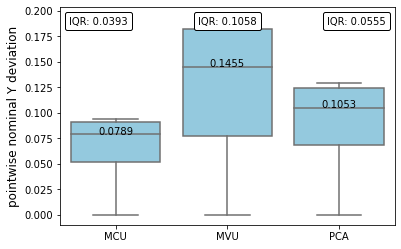}
    \caption{\label{fig:brktYopt}A comparison of pointwise deviation between $\hat{y}_{nom}$ and $y_{nom}$ on bracket data}
\end{minipage}%
\end{figure}

\section{Conclusion}
In this paper, we proposed a novel manifold learning approach called MCU Regression that is able to effectively learn LD structure of HD point clouds, and use it to quantify the association between the point clouds and some explanatory/control variables. This relationship can further be used for the process optimization. Inspired by the MVU \citep{666666}, the proposed MCU unfolds the data manifold by maximizing the covariance between regressors (covariates) and the point cloud response to make sure the useful association information is preserved, while the LD structure is learned. 

We performed simulation studies and applied the MCU regression on a bracket manufacturing dataset to validate the proposed framework and to compare its performance with two commonly used tools, namely, PCA and MVU. The results of both simulation and the case study indicated that the proposed MCU can unfold the manifold more effectively than the benchmarks, and preserving its correlation with explanatory variables. Additionally, from the study we observed that the resulting optimal setting from the proposed method is closer to the true optimal setting than those generated by the benchmarks. Extension of the proposed method to SPC applications for monitoring point-cloud profiles is practically important, yet challenging topic that requires further research.

\bigskip

\bibliographystyle{chicago}
\bibliography{Bibliography-MM-MC}

\end{document}